\documentclass{article}

\PassOptionsToPackage{numbers, compress}{natbib}
\usepackage[preprint]{neurips_2026}

\usepackage[utf8]{inputenc} 
\usepackage[T1]{fontenc}    
\usepackage{hyperref}       
\usepackage{url}            
\usepackage{booktabs}       
\usepackage{amsfonts}       
\usepackage{nicefrac}       
\usepackage{xcolor}         
\RequirePackage{graphicx}
\usepackage{xspace}
\usepackage{graphicx}
\usepackage{subfigure}
\usepackage{booktabs}
\usepackage[shortlabels]{enumitem}
\usepackage{pifont}
\usepackage[dvipsnames]{xcolor}

\RequirePackage{fancyhdr}
\RequirePackage{xcolor}
\RequirePackage{algorithm}
\RequirePackage{algorithmic}
\RequirePackage{natbib}
\RequirePackage{eso-pic}
\RequirePackage{forloop}
\RequirePackage{url}
\usepackage{hyperref}
\usepackage{booktabs}
\usepackage{amsmath,amssymb,amsthm}
\usepackage{soul}
\usepackage{subcaption}
\usepackage{multirow}
\usepackage{mathtools}
\usepackage{amsthm}
\usepackage[thinc]{esdiff}
\usepackage[table]{xcolor}
\usepackage{longtable}
\usepackage[capitalize,noabbrev]{cleveref}
\usepackage[textsize=tiny]{todonotes}
\setcitestyle{square}

\newcommand{\D}{\mathcal{D}}
\newcommand{\Cset}{\mathcal{C}}
\newcommand{\Z}{\mathcal{Z}}
\newcommand{\X}{\mathcal{X}}

\newtheorem{theorem}{Theorem}

\title{LiBaGS: Lightweight Boundary Gap Synthesis \\ for Targeted Synthetic Data Selection}

\author{%
  Abhishek Moturu \\ 
  Department of Computer Science\\
  University of Toronto\\
  The Hospital for Sick Children \\
  UHN KITE Research Institute \\
  T-CAIREM \\
  Vector Institute \\
  \texttt{moturuab@cs.toronto.edu} \\
  \And
  Anna Goldenberg $^*$  \\
  Department of Computer Science\\
  Department of Laboratory Medicine and Pathobiology \\
  University of Toronto\\
  The Hospital for Sick Children \\
  T-CAIREM \\
  Vector Institute \\
  \texttt{anna.goldenberg@utoronto.ca} \\
  \AND
  Babak Taati \thanks{equal contribution} \\
  Department of Computer Science\\
  Institute of Biomedical Engineering \\
  University of Toronto\\
  Rehabilitation Sciences Institute \\
  UHN KITE Research Institute \\
  Vector Institute \\
  \texttt{taati@cs.toronto.edu}
}
\begin{document}

\maketitle
\vspace{-1em}
\begin{abstract}
Synthetic data is useful only when the added samples fill missing parts of the training distribution that matter for the downstream task. We introduce \textbf{LiBaGS}, a lightweight, generator-agnostic method for targeted synthetic training data selection. LiBaGS scores candidate synthetic samples by combining decision-boundary proximity, predictive uncertainty, real-data density, and support validity, so that selected samples are both informative and likely to remain on the real data manifold. We then use a boundary-gap allocation rule that targets sparse but realistic decision-boundary neighborhoods, rather than simply adding more data or selecting only the most uncertain candidates. LiBaGS also learns when enough synthetic samples have been added through a marginal-value stopping rule, assigns softer labels near ambiguous boundaries, and uses a diversity objective to avoid redundant near-duplicate selections. Experiments show that LiBaGS improves accuracy over classical oversampling, hard augmentation, uncertainty-based selection, and targeted-generation selection criteria. 
\end{abstract}

\section{Introduction}
\label{introduction}
Synthetic data has become a common way to improve supervised learning when real data is scarce, expensive, imbalanced, private, or locally incomplete. Classical oversampling methods such as SMOTE, Borderline-SMOTE, ADASYN, Safe-Level-SMOTE, KMeans-SMOTE, MWMOTE, and Geometric SMOTE generate synthetic samples by interpolating minority examples, emphasizing boundary-adjacent points, or changing the local sampling geometry \citep{chawla2002smote,han2005borderline,he2008adasyn,bunkhumpornpat2009safe,last2017oversampling,barua2012mwmote,douzas2019geometric}. Representation-space methods such as manifold oversampling and DeepSMOTE move this idea into learned features, which is especially relevant for image-like data \citep{bellinger2018manifold,dablain2022deepsmote}. At the same time, augmentation methods such as mixup, Manifold Mixup, CutMix, Co-Mixup, CMO, and AugMax improve generalization by creating stronger or more diverse training examples \citep{zhang2017mixup,verma2019manifold,yun2019cutmix,kim2021comixup,park2022cmo,wang2021augmax}. These methods show that synthetic or augmented data can help, but they also highlight a recurring issue: \emph{not every synthetic sample is useful.}

This issue has become even more pronounced with modern generative models. GANs and diffusion models can produce large candidate pools, and recent work has shown that diffusion-generated data can improve image classification under the right conditions \citep{goodfellow2014gan,ho2020denoising,azizi2023synthetic}. Targeted-generation methods go further by using uncertainty, classifier feedback, long-tail signals, hard-negative objectives, or curriculum schedules to generate more informative synthetic samples \citep{niemeijer2024tsynd,hemmat2023feedback,koohpayegani2023genie,li2025gendataagent,askari2025improving,wang2024training,liang2025diffusion,hayden2025generative,kim2025generate}. They show that useful synthetic data is often hard, uncertain, rare, or classifier-informative. But, many methods still treat usefulness primarily as a local score, a generation prompt, or a curriculum signal. LiBaGS instead asks a much more explicit allocation question: \textbf{where should synthetic samples be placed, and how many should be kept, after accounting for real-data coverage?}

A candidate can be rare but irrelevant if it lies far from the decision boundary. A candidate can be uncertain but redundant if the training set already contains many nearby samples. A candidate can look boundary-like but harmful if it is off the real data support. Useful targeted synthetic data should satisfy three conditions at once: 
\begin{enumerate}
    \item it should lie near a decision-boundary neighborhood, 
    \item occupy a region where real data is locally sparse, and 
    \item remain on support, i.e. resemble realistic data from the true distribution.
\end{enumerate}

LiBaGS satisfies the above conditions. Let $\mathcal{Z}$ denote the feature space, let $z \in \mathcal{Z}$ denote the feature representation of an input sample, $p(z)$ the real data density, $q(z)$ the synthetic allocation density, $n$ the number of real samples, and $m$ the number of synthetic samples. Let $r(z)$ be a local boundary-gap value that is high only when a candidate is boundary-relevant, uncertain, and support-valid.

We view classification error near the decision boundary as depending on how much useful data exists in a region. This can be approximated by:
\begin{equation}
J_m(q)=\int_{\mathcal{Z}}\frac{r(z)}{np(z)+mq(z)}dz.
\label{eq:main_eq}
\end{equation}
Note that $np(z)+mq(z)$ represents the amount of local evidence from real and synthetic samples. Regions that already contain many real samples (large $p(z)$) benefit less from additional synthetic data. In contrast, regions with few real samples (small $p(z)$) but high boundary gap (large $r(z)$) is where synthetic samples can provide the greatest benefit.

Minimizing \ref{eq:main_eq} gives the following boundary-gap allocation rule:
\begin{equation}
q^*(z)
=
\frac{1}{m}\left[
\sqrt{\frac{r(z)}{\lambda}}
-
np(z)
\right]_+,
\label{eq:main_alloc}
\end{equation}
where $\lambda$ is a normalization constant automatically determined by the total synthetic allocation constraint. Intuitively, this rule places more synthetic samples in regions that are important for the decision boundary but poorly covered by real data, and places few or no synthetic samples in regions that already have enough real-data coverage.

LiBaGS is deliberately generator-agnostic and does not require retraining a diffusion model or training a new generator. A generator, interpolator, simulator, or augmentation pipeline proposes candidates, which LiBaGS then scores, filters, and diversifies. This is useful because generator quality varies. If a strong generator proposes many valid candidates, LiBaGS selects the most useful ones. If a weak generator proposes many artifacts, the support validity term and adaptive stopping rule prevent the method from blindly adding them. Thus, LiBaGS is a lightweight allocation rule that can be incorporated into existing synthetic data generating pipelines.

We make four contributions. First, we formulate targeted synthetic training data as a \emph{boundary-gap allocation problem}. Second, we derive a boundary-gap allocation rule that uses boundary proximity, uncertainty, support validity, and real-density coverage. Third, we provide a practical finite-candidate algorithm with adaptive stopping and diversity-aware selection. Fourth, we empirically compare LiBaGS against classical oversampling, hard augmentation, uncertainty-based selection, and targeted-generation selection criteria. Full proofs are in Appendix \ref{app:proofs}.

\section{Related Work}
\paragraph{Oversampling and interpolation.} SMOTE and its variants are classical baselines for synthetic training data \citep{chawla2002smote,han2005borderline,he2008adasyn,bunkhumpornpat2009safe,last2017oversampling,douzas2019geometric,barua2012mwmote}. These methods are often framed around class imbalance, local minority neighborhoods, or interpolation geometry. LiBaGS is motivated by a related intuition that samples near the decision boundary are often especially useful. However, it differs in two important ways: it does not target boundary proximity alone but also discounts regions that are already well covered by real data and it is generator-agnostic and can select from any proposed synthetic candidate pool, rather than being limited to interpolation among minority-class examples.

\paragraph{Augmentation and mixing.} Mixup, Manifold Mixup, CutMix, Co-Mixup, CMO, and AugMax improve generalization by creating similar, patch-level, context-rich, or hard augmented examples \citep{zhang2017mixup,verma2019manifold,yun2019cutmix,kim2021comixup,park2022cmo,wang2021augmax}. These methods provide strong augmentation baselines, however, LiBaGS is complementary to them since it can select from augmented candidates by asking whether a candidate fills an under-covered decision-boundary neighborhood.

\paragraph{Targeted synthetic generation.} TSynD optimizes latent representations toward high epistemic uncertainty for medical image classification \citep{niemeijer2024tsynd}. Feedback-guided synthesis uses classifier feedback while emphasizing support and diversity \citep{hemmat2023feedback}. GenDataAgent augments data on the fly using signals related to difficult training samples and gradient-update variance \citep{li2025gendataagent}. Deliberate Practice improves synthetic-data scaling by dynamically adding challenging examples \citep{askari2025improving}. DCDM and DisCL control sample difficulty or synthetic-to-real guidance levels \citep{wang2024training,liang2025diffusion}. Longtail Guidance and Generate What Matters target rare, hard, or classifier-useful synthetic samples \citep{hayden2025generative,kim2025generate}. Boundary-aware latent interpolation, on-manifold adversarial augmentation, targeted diffusion augmentation, and diffusion-based data augmentation are also related \citep{liu2025exploreaugment,patel2021manifold,nguyen2025we,islam2024diffusemix,wang2024diffmix,wang2025diffii}. But, LiBaGS contributes a density-aware allocation and stopping rule that can be applied after candidate generation.

\paragraph{Uncertainty, coverage, and active learning.} Active learning selects informative real examples using uncertainty or coverage \citep{settles2009active,sener2017active}. Query synthesis and adversarial active sampling methods synthesize or search near decision boundaries rather than only sampling existing points \citep{wang2015active,mayer2020adversarial}. Training dynamics methods identify hard, ambiguous, or forgotten examples \citep{toneva2018empirical,swayamdipta2020datamaps}. LiBaGS synthetic selection also includes support filtering, since some generated candidates may fall outside the real data distribution and therefore be unrealistic or unhelpful for training.

\section{Method}
\label{method}
\subsection{Setup and synthetic candidates}
Let $\D=\{(x_i,y_i)\}_{i=1}^n$ be a supervised training set and let $h:\X\to\Z$ be a fixed representation map. A candidate generator produces:
\begin{equation}
\Cset=\{(\tilde{x}_j,\tilde{y}_j)\}_{j=1}^{M},\quad z_j=h(\tilde{x}_j).
\end{equation}
The generator can be a simulator, local interpolation rule, augmentation pipeline, autoencoder, diffusion model, or any class-conditioned sampler \citep{kingma2013auto,shorten2019survey,goodfellow2014gan,ho2020denoising,islam2024diffusemix,wang2024diffmix,wang2025diffii,kim2024datadream}. LiBaGS only assumes access to the finite candidate pool. The goal is to select a subset $S\subseteq\Cset$ for final training.

For each candidate, LiBaGS estimates four quantities. First, a scoring model produces a class-probability vector $\pi(z)$, commonly used for predictive uncertainty, ensembles, and calibrated confidence scores in active learning / uncertainty estimation \citep{settles2009active,gal2016dropout,lakshminarayanan2017simple,guo2017calibration}. The top-two margin is:
\begin{equation}
    \Delta(z)=\pi_{(1)}(z)-\pi_{(2)}(z),
\end{equation}
where smaller values indicate proximity to the decision boundary. To emphasize these boundary-adjacent regions, LiBaGS defines the smooth decision boundary neighborhood weight:
\begin{equation}
a_\tau(z)=\exp\left(-\frac{\Delta(z)^2}{2\tau^2}\right),
\label{eq:boundary_weight}
\end{equation}
where $\tau>0$ controls how broadly points are considered near the decision boundary, with small $\tau$ meaning narrow focus and large $\tau$ meaning broader boundary neighborhood. In practice, $\tau$ is chosen from the empirical distribution of margin values (e.g., using a lower quantile of $\Delta(z)$ over the candidate pool), so that the weighting adapts to the amount of uncertainty in a given task.

The uncertainty score is the predictive entropy:
\begin{equation}
u(z)=H(\pi(z)).
\label{eq:uncertainty}
\end{equation}
The support-validity score $b(z)\in[0,1]$ is high for candidates close to the real representation region, following the general idea that synthetic samples should remain near the real data support \citep{hemmat2023feedback,wu2025filtering}. The real-density estimate $\hat p(z)$ is computed using nearest-neighbor or kernel density estimates \citep{cover1967nearest,silverman2018density}. The boundary-gap importance is then:
\begin{equation}
r(z)=a_\tau(z)u(z)b(z).
\label{eq:r}
\end{equation}

\subsection{Boundary-gap allocation score}

The continuous allocation rule in Eq.~\eqref{eq:main_eq} leads to the following finite candidate score:
\begin{equation}
G_j=\left[\sqrt{\frac{r(z_j)}{\lambda}}-n\hat p(z_j)\right]_+.
\label{eq:gap_score}
\end{equation}
Here, $r(z_j)$ measures how useful candidate $z_j$ is for improving the decision boundary, while $\hat p(z_j)$ measures how well that region is already covered by real training data. The score therefore favors candidates that are both boundary-important and underrepresented in the real dataset.

Intuitively, $G_j$ becomes large when a candidate lies in a high-uncertainty boundary region that still lacks sufficient real-data coverage, and becomes small when the candidate lies in a region that is already well covered, is far from the decision boundary, or appears off-support and therefore unlikely to provide useful training information.

\subsection{Adaptive stopping}

A fixed synthetic budget can be brittle, so LiBaGS instead uses an adaptive stopping rule related to diminishing-return selection and model-selection-by-stopping \citep{hoeffding1963probability,nemhauser1978analysis,krause2014submodular}. Intuitively, once a boundary region has already received enough synthetic coverage, adding additional nearby samples should provide progressively smaller gains.

Consider a local region of feature space with boundary importance $r_j$. Let $c_j$ denote the amount of existing real-data coverage in that region, and let $t_j$ denote the number of synthetic samples already allocated there. The marginal improvement from adding one more synthetic sample is modeled as:
\begin{equation}
\Delta_j(t_j)=\frac{r_j}{c_j+t_j}-\frac{r_j}{c_j+t_j+1}
=
\frac{r_j}{(c_j+t_j)(c_j+t_j+1)}.
\label{eq:marginal}
\end{equation}
As more synthetic samples are added to the same region, the marginal gain decreases. LiBaGS keeps adding candidates only while the best remaining marginal gain exceeds a threshold, $\eta>0$, which is chosen from the flattening point of the ordered marginal gain curve. Thus, the number of selected synthetic samples $\widehat m=|S|$ is learned from the candidate pool rather than fixed in advance.

\subsection{Support filtering, soft labels, and diversity}

A candidate with high uncertainty but a small support-validity score $b(z)$ is rejected unless it clears the stopping threshold. This is important when the generator is imperfect, since some boundary-like candidates may lie away from the real data support.

For labels, class-conditioned generators provide a proposed class $c(z)$. Near the boundary, hard labels can be overconfident, so we use the class-probability vector $\pi(z)$ to form a soft label:
\begin{equation}
\widetilde y(z)
=
\bigl(1-a_{\tau}(z)\bigr)e_{c(z)}
+
a_{\tau}(z)\pi(z),
\label{eq:soft_label}
\end{equation}
where $e_{c(z)}$ is the one-hot generator label and $\pi(z)$ is the class-probability vector from the scoring model. Further from the boundary, $a_{\tau}(z)$ is small, so the generator label is trusted more. Near the boundary, $a_{\tau}(z)$ is large, so the probability vector contributes more, reducing overconfidence in ambiguous regions.

Let $v_j=G_jb(z_j)$ and let $k(u,j)$ denote a similarity measure between candidates $u$ and $j$ in feature space, such as a Gaussian kernel or cosine similarity. For a selected set $S\subseteq\mathcal C$, LiBaGS maximizes:
\begin{equation}
F(S)=\sum_{u\in\mathcal C}v_u\max_{j\in S}k(u,j).
\label{eq:facility}
\end{equation}
This objective rewards a selected set that covers many high-value candidates while avoiding redundant near-duplicates.

\renewcommand{\algorithmicrequire}{\textbf{Input:}}
\renewcommand{\algorithmicensure}{\textbf{Output:}}

\begin{algorithm}[t]
\caption{Lightweight Boundary Gap Synthesis (LiBaGS)}
\label{alg:libags}
\begin{algorithmic}[1]
\REQUIRE real training set $\mathcal D$, candidate generator, encoder $h$, scoring model
\ENSURE final classifier, selected synthetic set $S$, and learned count $\widehat m$

\STATE Train the scoring model on the real data only.
\STATE Generate a candidate pool $\mathcal C$ and map candidates to $z=h(x)$.
\STATE For each candidate, estimate $\pi(z)$, $a_\tau(z)$, $u(z)$, $\hat p(z)$, and $b(z)$.
\STATE Compute the boundary-gap allocation score $G_j$ using Eq.~\eqref{eq:gap_score}.
\STATE Compute candidate values $v_j=G_jb(z_j)$.
\STATE Set the stopping threshold $\eta$ using early stopping on marginal-gain curve.
\STATE Select candidates greedily using Eq.~\eqref{eq:facility} and stop when the best marginal gain is below $\eta$.
\STATE Assign soft labels to selected candidates using Eq.~\eqref{eq:soft_label}.
\STATE Train the final classifier on $\mathcal D \cup S$.
\STATE \textbf{return} final classifier, selected synthetic set $S$, and $\widehat m=|S|$.
\end{algorithmic}
\end{algorithm}

\section{Theory}
\label{theory}
This section explains the main choices in Algorithm~\ref{alg:libags}. The main idea is that useful synthetic candidates should be near the decision boundary, uncertain, supported by real data, diverse, and not already well covered by the real training set.

\begin{theorem}[Local risk and boundary-gap allocation]
\label{thm:local_alloc}
Assume that, in a small region around $z$, the local sample count is proportional to $np(z)+mq(z)$, where $p$ is the real density and $q$ is the selected synthetic density. If the local error decreases with this count and is weighted by this: $r(z)=a_\tau(z)u(z)b(z)$, then the allocation objective is:
\begin{equation}
J_m(q)=
\int_{\mathcal Z}
\frac{r(z)}{np(z)+mq(z)}
\,dz.
\label{eq:local_objective}
\end{equation}
For fixed $m>0$, for some $\lambda>0$, minimizing $J_m(q)$ over $q\geq0$ with $\int q(z)\,dz=1$ gives:
\begin{equation}
q^*(z)
=
\frac{1}{m}\left[
\sqrt{\frac{r(z)}{\lambda}}
-
np(z)
\right]_+.
\label{eq:theory_alloc}
\end{equation}
\end{theorem}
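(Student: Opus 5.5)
The first part of the statement is a modelling step. If the local error in a small cell around $z$ scales inversely with the local sample count, that count is proportional to $np(z)+mq(z)$, and the error is weighted by $r(z)$, then integrating the weighted error over $\mathcal Z$ gives Eq.~\eqref{eq:local_objective} directly. The substantive content is the minimization, which I will treat as a convex program.

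\textbf{Convexity and pointwise decoupling.} For each fixed $z$ with $r(z)\ge 0$, the map $t\mapsto r(z)/(np(z)+mt)$ is convex and nonincreasing on $t\ge 0$. Hence $J_m$ is a convex functional on the convex set $\{q\ge 0,\int q=1\}$. I will form the Lagrangian
\[
L(q,\lambda)=\int_{\mathcal Z}\left(\frac{r(z)}{np(z)+mq(z)}+\lambda' m\, q(z)\right)dz-\lambda' m,
\]
where the multiplier is scaled by $m$ for convenience. Because the constraint is a single integral, the Lagrangian decouples pointwise. For each $z$ I will therefore minimize the scalar function $\phi_z(t)=r/(np+mt)+\lambda' m t$ over $t\ge 0$. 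Its derivative is $-mr/(np+mt)^2+\lambda' m$. The KKT conditions give an interior solution $np+mt=\sqrt{r/\lambda'}$ when $\sqrt{r/\lambda'}>np$, and $t=0$ otherwise, since then $\phi_z'(0)\ge 0$. Writing $\lambda=\lambda'$, this is exactly $q^*(z)=\frac1m[\sqrt{r/\lambda}-np]_+$.

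\textbf{Existence of $\lambda$.} I then need to show that some $\lambda>0$ meets the normalization. Define $\Phi(\lambda)=\int \frac1m[\sqrt{r/\lambda}-np]_+dz$. This function is continuous and nonincreasing in $\lambda$. It tends to $0$ as $\lambda\to\infty$, by dominated convergence under mild integrability of $\sqrt r$ on the region where $r>0$. It tends to $\infty$ as $\lambda\to 0$, provided $r>0$ on a set of positive measure. The intermediate value theorem then yields a $\lambda$ with $\Phi(\lambda)=1$.

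\textbf{Optimality.} Sufficiency follows from convexity. For any feasible $q$, pointwise minimality of $q^*$ for $\phi_z$ gives $J_m(q)+\lambda m\int q\ge J_m(q^*)+\lambda m\int q^*$. The two constraint terms are both equal to $\lambda m$, so $J_m(q)\ge J_m(q^*)$. Uniqueness holds wherever $r>0$ by strict convexity.

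\textbf{Main obstacle.} The hard step is rigor in infinite dimensions. This means making the integrals well defined: $J_m(q^*)<\infty$, $p$ may vanish where $r>0$, and $\sqrt r$ must be integrable on the active set. I would handle this by adding explicit mild assumptions, namely measurability, $r\in L^1$, $\sqrt r$ locally integrable, and $\{r>0\}$ of positive measure. With these in place, the pointwise Lagrangian argument avoids any need for functional derivatives.
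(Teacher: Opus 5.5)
Your proof is correct and arrives at the same allocation, but by a different and more complete route than the paper. The paper uses a first-order ``equal marginal benefit'' argument. On the set where $q^*>0$, moving mass between regions cannot lower $J_m$, so $m r(z)/(np(z)+mq^*(z))^2$ equals a common constant $C$, which is then solved for $q^*$. The clipping at zero is simply asserted on the grounds that a density cannot be negative. The existence of the normalizing $\lambda$ is stated rather than shown, and convexity is noted but never used to conclude that the stationary point is a minimizer.

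Your pointwise Lagrangian fills exactly these holes. The condition $\phi_z'(0)\ge 0$ is what actually justifies the inactive set $\{\sqrt{r/\lambda}\le np\}$; the paper would need the complementary inequality $m r/(np)^2\le C$ there. The intermediate-value argument for $\Phi$ produces $\lambda$. Integrating $\phi_z(q(z))\ge\phi_z(q^*(z))$ gives global optimality and uniqueness without any functional derivatives. The paper's version buys intuition, since it is the continuous analogue of the greedy marginal-gain rule in Theorem~\ref{thm:adaptive}; yours buys an actual proof.

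One fix to your hypotheses: $r\in L^1$ plus local integrability of $\sqrt r$ does not suffice on an unbounded $\mathcal Z$. For example, with $\mathcal Z=\mathbb R$, $r(z)=1/(1+z^2)$ and Gaussian $p$, you get $\Phi\equiv\infty$, so no $\lambda$ normalizes. The right assumption is $\sqrt r\in L^1(\mathcal Z)$. It is also necessary for the problem to be non-degenerate, since Cauchy--Schwarz gives $\bigl(\int\sqrt r\,dz\bigr)^2\le (n+m)\,J_m(q)$ for every feasible $q$.

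Under this assumption, $\Phi(\lambda)\le \lambda^{-1/2}\|\sqrt r\|_1/m$ is finite and continuous for every $\lambda>0$. Also $J_m(q^*)\le\sqrt{\lambda}\,\|\sqrt r\|_1<\infty$, because $r/(np)\le\sqrt{\lambda r}$ on the inactive set. Where $p$ vanishes but $r>0$, you have $q^*=\frac1m\sqrt{r/\lambda}>0$, so no division by zero occurs. Your remaining integrability worries therefore disappear, and $r\in L^1$ is not needed as a separate assumption.
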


\paragraph{Proof sketch.}
Regions with larger $np(z)+mq(z)$ have lower local error, while regions with larger $r(z)$ are more important for boundary learning. The objective is convex in $q$ and its optimal condition assigns synthetic mass until selected regions have equal marginal benefit. Subtracting the existing real coverage and clipping at zero gives Eq.~\eqref{eq:theory_alloc}; $G_j$ in Eq.~\eqref{eq:gap_score} is the finite-candidate version.

\begin{theorem}[Adaptive stopping]
\label{thm:adaptive}
For a small region $j$, let $c_j>0$ be real coverage, $t_j$ the number of selected synthetic samples, and $r_j\geq0$ the boundary importance. The gain of adding a sample is:
\begin{equation}
\Delta_j(t_j)
=
\frac{r_j}{c_j+t_j}
-
\frac{r_j}{c_j+t_j+1}
=
\frac{r_j}{(c_j+t_j)(c_j+t_j+1)}.
\label{eq:theory_marginal}
\end{equation}
Greedy selection with threshold $\eta$ accepts gains $> \eta$ and stops when best remaining gain is $< \eta$.
\end{theorem}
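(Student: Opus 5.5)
The statement bundles three claims, and I will prove each in turn: the algebraic identity for $\Delta_j(t_j)$, the fact that these marginal gains are positive and strictly decreasing in $t_j$ (so that $f_j(t)=r_j/(c_j+t)$ is a discretely convex, decreasing per-region loss), and the assertion that threshold-based greedy selection is well defined and produces the reported behaviour. The identity is a direct common-denominator computation:
\[
\frac{r_j}{c_j+t_j}-\frac{r_j}{c_j+t_j+1}=\frac{r_j\bigl[(c_j+t_j+1)-(c_j+t_j)\bigr]}{(c_j+t_j)(c_j+t_j+1)}.
\]
This is valid because $c_j>0$ and $t_j\ge 0$ keep both denominators strictly positive.

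Next I will establish monotonicity. Since $r_j\ge 0$ and the denominator $(c+t)(c+t+1)$ is strictly increasing in $t$, the gain satisfies $\Delta_j(t+1)\le \Delta_j(t)$, with strict inequality whenever $r_j>0$. The gain also tends to $0$ as $t\to\infty$. Two consequences follow:
\begin{enumerate}
\item Because $\Delta_j(t)\to 0$, each region with $r_j>0$ accepts only finitely many samples above any $\eta>0$. Explicitly, the count is the number of integers $t\ge 0$ with $(c_j+t)(c_j+t+1)<r_j/\eta$, namely $t_j^*=\max\{0,\lceil T_j-c_j\rceil\}$, where $T_j$ is the positive root of $T(T+1)=r_j/\eta$.
\item Regions with $r_j=0$, or with $r_j\le \eta\, c_j(c_j+1)$, are never selected.
\end{enumerate}
Hence, for a finite set of regions, the greedy procedure terminates after at most $\sum_j t_j^*$ steps.

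The remaining step is to show that greedy selection with threshold $\eta$ accepts exactly the gains exceeding $\eta$ and stops when the best remaining gain falls below $\eta$. The total objective $\sum_j f_j(t_j)$ is separable, and each coordinate has nonincreasing marginal gains. The greedy rule always picks the region with the largest current gain $\max_j \Delta_j(t_j)$. Because every region's gain sequence is nonincreasing, the sequence of accepted gains is itself nonincreasing; this can be seen by an exchange argument over the sorted multiset $\{\Delta_j(t):t\ge 0\}$. The greedy run therefore accepts precisely the elements of this multiset that exceed $\eta$, in decreasing order. The first time the maximum falls to at most $\eta$, no future gain can exceed $\eta$, so stopping is irreversible and consistent. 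This also shows that the final allocation $t_j^*$ minimizes $\sum_j f_j(t_j)+\eta\sum_j t_j$, since each unit is added exactly when it reduces the penalized objective; this is the discrete analogue of the Lagrangian condition in Theorem~\ref{thm:local_alloc}.

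The main obstacle is boundary behaviour. Ties at exactly $\Delta=\eta$ are left ambiguous by the statement's strict inequalities, and I will resolve them by a fixed convention of rejecting the sample. In addition, the facility-location coupling in Eq.~\eqref{eq:facility} breaks separability. I will therefore restrict the theorem to the separable per-region model above and remark that, for the coupled objective, the same argument goes through using submodularity (diminishing returns), as in \citep{nemhauser1978analysis}.
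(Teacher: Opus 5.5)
Your proposal is correct and follows the same route as the paper: the common-denominator identity, diminishing returns because $(c_j+t_j)(c_j+t_j+1)$ grows with $t_j$, and the observation that greedy always takes the largest remaining gain, so once that falls below $\eta$ no other sample can pass. Your explicit count $t_j^*$, the termination bound, the sorted-merge argument and the penalized-objective optimality are valid refinements that the paper leaves implicit; note, however, that for the coupled facility-location objective only the irreversibility of stopping carries over, not exact optimality.
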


\paragraph{Proof sketch.}
The gain $\Delta_j(t_j)$ decreases as $t_j$ grows, so each region has diminishing returns. $\eta> 0$ is chosen from the flattening point of the ordered marginal gain curve. Greedy selection always takes the largest available gain and once that gain is below $\eta$, we stop.

\begin{theorem}[Soft-label stability]
\label{thm:soft}
Let $\rho(z)=\Pr(Y\,|\,Z=z)$ be the true class distribution. The soft label $\widetilde y(z)=\bigl(1-a_\tau(z)\bigr)e_{c(z)}+a_\tau(z)\pi(z)$ satisfies:
\begin{equation}
\|\widetilde y(z)-\rho(z)\|_1
\leq
\bigl(1-a_\tau(z)\bigr)\|e_{c(z)}-\rho(z)\|_1
+
a_\tau(z)\|\pi(z)-\rho(z)\|_1.
\label{eq:soft_error_bound}
\end{equation}
\end{theorem}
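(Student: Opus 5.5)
The bound in Theorem~\ref{thm:soft} follows from writing $\rho(z)$ itself as the same convex combination that defines $\widetilde y(z)$ and then applying the triangle inequality for the $\ell_1$ norm. First record that $a_\tau(z)=\exp(-\Delta(z)^2/(2\tau^2))$ lies in $(0,1]$ by Eq.~\eqref{eq:boundary_weight}. Hence both $1-a_\tau(z)$ and $a_\tau(z)$ are nonnegative and they sum to one.

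Next, use the trivial decomposition $\rho(z)=(1-a_\tau(z))\rho(z)+a_\tau(z)\rho(z)$. Subtracting this from the definition of $\widetilde y(z)$ in Eq.~\eqref{eq:soft_label} gives
\begin{equation*}
\widetilde y(z)-\rho(z)=\bigl(1-a_\tau(z)\bigr)\bigl(e_{c(z)}-\rho(z)\bigr)+a_\tau(z)\bigl(\pi(z)-\rho(z)\bigr).
\end{equation*}
Take $\|\cdot\|_1$ of both sides. The triangle inequality bounds the left side by the sum of the norms of the two terms. Absolute homogeneity then pulls out the scalars, and these can be written without absolute values because they are nonnegative. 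The result is exactly Eq.~\eqref{eq:soft_error_bound}.

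There is no real obstacle here; the one step needing care is checking that the coefficients are in $[0,1]$. Otherwise $|1-a_\tau(z)|$ would appear instead of $1-a_\tau(z)$. As a remark, I would add that the bound is a convexity statement that holds for any norm, not only $\ell_1$. It shows the soft label's error interpolates between the generator-label error and the scoring-model error: near the boundary, where $a_\tau(z)\approx 1$, it is governed by how well $\pi$ is calibrated.
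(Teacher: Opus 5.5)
Your proof is correct and follows the paper's argument: both rewrite $\widetilde y(z)-\rho(z)$ as $(1-a_\tau(z))(e_{c(z)}-\rho(z))+a_\tau(z)(\pi(z)-\rho(z))$ and then apply the triangle inequality for $\|\cdot\|_1$. Your explicit check that $a_\tau(z)\in(0,1]$, which lets both scalars come out of the norms without absolute values, is a step the paper leaves implicit.
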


\paragraph{Proof sketch.}
The result follows directly from the triangle inequality applied to $\widetilde y(z)$.

\begin{theorem}[Diversity and support filtering]
\label{thm:diversity}
Let $F(S)=\sum_{u\in\mathcal C}v_u\max_{j\in S}k(u,j)$ and let $v_u=G_ub(z_u)$,
with $v_u\geq0$ and $k(u,j)\geq0$. Then, $F$ is monotone submodular.

Also, for a candidate $j$,
\begin{equation}
F(S\cup\{j\})-F(S)
\leq
\sum_{u\in\mathcal C}v_uk(u,j),
\end{equation}
so candidates with low support-weighted neighborhood value cannot pass the stopping threshold.
\end{theorem}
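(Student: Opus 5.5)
The plan is to treat $F$ as a nonnegative combination of facility-location terms and verify monotonicity and submodularity for each term separately. Both properties are preserved under nonnegative weighted sums, which is where $v_u\ge 0$ enters. We adopt the standard convention $\max_{j\in\emptyset}k(u,j)=0$, so that $F(\emptyset)=0$. This is consistent with $k\ge 0$.

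For a fixed $u\in\mathcal C$, define $f_u(S)=\max_{j\in S}k(u,j)$. Then $F=\sum_u v_u f_u$, and it suffices to prove two facts about $f_u$.

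\emph{Monotonicity.} If $S\subseteq T$, then the maximum over $T$ ranges over a superset, so $f_u(S)\le f_u(T)$. The empty set causes no trouble, because $k\ge0$ gives $f_u(S)\ge 0=f_u(\emptyset)$.

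\emph{Diminishing returns.} For $S\subseteq T$ and $j\notin T$, the marginal gain is
\[
f_u(S\cup\{j\})-f_u(S)=\bigl[k(u,j)-f_u(S)\bigr]_+ .
\]
The map $x\mapsto [k(u,j)-x]_+$ is nonincreasing, and $f_u(S)\le f_u(T)$. Hence the gain for $S$ is at least the gain for $T$. Multiplying by $v_u\ge0$ and summing over $u$ gives monotone submodularity of $F$.

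For the marginal-gain bound, we use the same identity. Since $f_u(S)\ge 0$, we have $[k(u,j)-f_u(S)]_+\le k(u,j)$. Weighting by $v_u$ and summing gives
\[
F(S\cup\{j\})-F(S)\le\sum_u v_u k(u,j).
\]
The stopping consequence then follows directly. If $\sum_u v_u k(u,j)<\eta$, the greedy step can never see a gain for $j$ that is $\ge\eta$, so $j$ is never accepted under the rule of Theorem~\ref{thm:adaptive}.

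Because $v_u=G_u b(z_u)$, this bound is small exactly when $j$'s neighbours have low support validity or low gap score. That is the sense in which the bound justifies support filtering.

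The only delicate point is the empty-set convention and the requirement $k\ge0$. Without these, $f_u(\emptyset)$ is undefined, and the bound $[k-f_u(S)]_+\le k$ can fail at $S=\emptyset$. So I would state the convention explicitly at the outset. Nothing else presents a real obstacle.
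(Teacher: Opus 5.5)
Your proof is correct and follows essentially the same route as the paper's. Monotonicity holds because the maximum is taken over a superset. Submodularity holds because the per-$u$ marginal gain $[k(u,j)-\max_{\ell\in S}k(u,\ell)]_+$ is nonincreasing in the current maximum. The bound comes from dropping the subtracted term.

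Your explicit convention $\max_{j\in\emptyset}k(u,j)=0$ and your use of $f_u(S)\ge 0$ are slightly more careful than the paper. The paper justifies the last step with $[x-y]_+\le x$ for $x\ge 0$ and leaves implicit the condition $y\ge 0$, which follows from $k\ge 0$.
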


\paragraph{Proof sketch.}
Adding a candidate can only increase coverage, so $F$ is monotone. Adding a candidate also has diminishing returns, so $F$ is submodular. Since $v_u=G_ub(z_u)$, low-support regions have small value and are filtered unless they still clear the marginal gain threshold.

\section{Experiments}
\label{sec:experiments}

We evaluate LiBaGS in three settings: a two moons boundary-gap task, an 8$\times$8 handwritten-digit 3 vs.\ 8 task, and a CIFAR-10 cat vs.\ dog image task. Across all experiments, LiBaGS uses the same procedure described in Section~\ref{method}: a scoring model is trained only on the real training data, candidates are scored with $r(z)=a_\tau(z)u(z)b(z)$, the allocation score $G_j$ is computed, candidates are selected by the objective with marginal-gain stopping, and selected candidates are assigned the soft label in Eq.~\eqref{eq:soft_label}. The selected synthetic count $\widehat m$ is learned independently on each seed and is not fixed in advance. For fair comparison, all fixed-count baselines receive the same learned synthetic count $\widehat m$ as LiBaGS on a given seed.

\paragraph{Datasets and setup.}
The two moons experiment is a nonlinear classification problem, where we draw real training points away from a central boundary region, so the training set contains a local coverage gap near the decision boundary. The candidate pool contains simulator-generated boundary candidates, additional supported candidates, and off-support candidates. The representation map $h$ is a fixed random Fourier feature map, and the final classifier is logistic regression.

The 8$\times$8 digits experiment uses the scikit-learn handwritten digits data~\cite{digits}, licensed under (CC BY 4.0), restricted to a binary 3 vs.\ 8 task (since these two numbers may have a difficult decision boundary). The real training set contains a small number of easy examples per class, while the candidate pool contains held-out hard digits, same-class interpolations, and low-structure digits. This setting tests whether LiBaGS can use the boundary-gap score to identify useful ambiguous candidates while avoiding off-support candidates.

For CIFAR-10~\cite{cifar10}, which is widely used for research purposes, we use the cat vs.\ dog task with $25$ real training images per class, $1000$ auxiliary source images per class, and $800$ test images per class. We use a frozen ResNet-50~\cite{he2016deep} feature extractor from \texttt{timm}. The candidate pool is generated with Stable Diffusion v1-5 text/image-to-image model~\cite{Rombach_2022_CVPR} using $50$ candidates per class, diffusion strength $0.35$, guidance scale $6.0$, and $30$ denoising steps. The generator is used only to produce the finite candidate pool; LiBaGS then selects a subset adaptively.

\paragraph{Baselines.}
We compare against empirical risk minimization (ERM)~\citep{vapnik1999overview}, random candidate selection, noise augmentation, and the same selection-signal baselines used in the CIFAR experiment: TSynD epistemic criterion~\citep{niemeijer2024tsynd}, feedback-guided criterion~\citep{hemmat2023feedback}, GenDataAgent gradient-variance criterion~\citep{li2025gendataagent}, Deliberate Practice entropy criterion~\citep{askari2025improving}, conformal filtering~\citep{wu2025filtering}, long-tail rare/hard criterion~\cite{hayden2025generative}, C2I boundary-influence criterion~\citep{kim2025generate}, and uncertainty-only~\citep{settles2009active}. On the two small datasets, we additionally include classical oversampling and augmentation baselines: SMOTE~\citep{chawla2002smote}, Borderline-SMOTE~\citep{han2005borderline}, ADASYN~\citep{he2008adasyn}, Safe-Level-SMOTE~\citep{bunkhumpornpat2009safe}, KMeans-SMOTE~\citep{last2017oversampling}, DeepSMOTE~\citep{dablain2022deepsmote}, and AugMax~\cite{wang2021augmax}. These additional baselines do not use the external candidate pool, but synthesize examples directly from the real training set.

\begin{table}[h!]
\centering
\caption{Accuracy across the three datasets across baselines used in the CIFAR-10 experiment. All non-ERM methods use the adaptive synthetic count learned by LiBaGS on each seed; means and standard deviations are over 5 seeds.}
\label{tab:main_results}
\small
\begin{tabular}{lccc}
\toprule
Method & Two moons & 8x8 digits & CIFAR-10 cat/dog \\
\midrule
ERM~\cite{vapnik1999overview} & 0.8859 $\pm$ 0.0165 & 0.9391 $\pm$ 0.0167 & 0.8040 $\pm$ 0.0230 \\
Noise augmentation & 0.8977 $\pm$ 0.0226 & 0.9342 $\pm$ 0.0189 & 0.7918 $\pm$ 0.0267 \\
Random candidates & 0.9299 $\pm$ 0.0198 & 0.9491 $\pm$ 0.0167 & 0.8120 $\pm$ 0.0159 \\
TSynD epistemic criterion~\cite{niemeijer2024tsynd} & 0.9021 $\pm$ 0.0313 & 0.9503 $\pm$ 0.0158 & 0.8054 $\pm$ 0.0173 \\
Feedback-guided criterion~\cite{hemmat2023feedback} & 0.9330 $\pm$ 0.0184 & 0.9516 $\pm$ 0.0265 & 0.8078 $\pm$ 0.0158 \\
GenDataAgent gradient-variance criterion~\cite{li2025gendataagent} & 0.9097 $\pm$ 0.0840 & 0.9342 $\pm$ 0.0306 & 0.8114 $\pm$ 0.0150 \\
Deliberate Practice entropy criterion~\cite{askari2025improving} & 0.9437 $\pm$ 0.0169 & 0.9466 $\pm$ 0.0303 & 0.8101 $\pm$ 0.0140 \\
Conformal filtering~\cite{wu2025filtering} & 0.9209 $\pm$ 0.0254 & 0.9466 $\pm$ 0.0273 & 0.8130 $\pm$ 0.0151 \\
Long-tail rare/hard criterion~\cite{hayden2025generative} & 0.9369 $\pm$ 0.0197 & 0.9478 $\pm$ 0.0143 & 0.8088 $\pm$ 0.0167 \\
C2I boundary-influence criterion~\cite{kim2025generate} & 0.9429 $\pm$ 0.0170 & 0.9404 $\pm$ 0.0222 & 0.8141 $\pm$ 0.0149 \\
Uncertainty-only~\cite{settles2009active} & 0.9431 $\pm$ 0.0166 & 0.9391 $\pm$ 0.0178 & 0.8084 $\pm$ 0.0199 \\
\midrule
LiBaGS & \textbf{0.9454 $\pm$ 0.0163} & \textbf{0.9540 $\pm$ 0.0315} & \textbf{0.8159 $\pm$ 0.0167} \\
\bottomrule
\end{tabular}
\end{table}

\paragraph{Main results.}
Table~\ref{tab:main_results} reports accuracy for the shared baselines across all three datasets. LiBaGS obtains the best mean accuracy in each setting. On two moons, LiBaGS improves over ERM from $0.8859$ to $0.9454$, showing that the boundary-gap allocation score recovers useful samples from the missing boundary region. On 8$\times$8 digits 3 vs.\ 8, LiBaGS improves over ERM from $0.9391$ to $0.9540$, while also outperforming the other baselines. On CIFAR-10 cat vs.\ dog, LiBaGS reaches $0.8159$, compared with $0.8040$ for ERM. These gains are modest but consistent, and they match the intended use case: LiBaGS is most useful when the candidate pool contains boundary-adjacent samples that are informative, on-support, and not already well covered by the real data. Together, the results suggest that explicitly targeting sparse boundary regions can improve synthetic-data selection without requiring a fixed (unknown) synthetic budget.

Tables~\ref{tab:two_moons} and~\ref{tab:digits8x8} give the full results for the two moons experiment and the 8$\times$8 digits experiment, including the AUROC. The two moons experiment shows the largest absolute improvement because the missing boundary region is explicit. Figure~\ref{fig} shows this explicitly. Although both tasks have several methods obtaining very similar ranking performance, LiBaGS is still the strongest on accuracy, while some uncertainty baselines have slightly higher AUROC. This suggests that LiBaGS is most useful as a targeted training-data selector: rather than simply improving global ranking of synthetic samples, it adds training signal in sparse boundary regions where additional examples can directly improve the learned classifier.

\begin{table}[h!]
\centering
\small
\caption{Two moons results. LiBaGS learns the synthetic count (177.2 $\pm$ 42.5 across 5 seeds) by marginal-gain stopping; all baselines use this count for fair selection comparison.}
\label{tab:two_moons}
\begin{tabular}{lcc}
\toprule
Method & Accuracy & AUROC \\
\midrule
ERM~\cite{vapnik1999overview} & 0.8859 $\pm$ 0.0165 & 0.9554 $\pm$ 0.0124 \\
Noise augmentation & 0.8977 $\pm$ 0.0226 & 0.9626 $\pm$ 0.0149 \\
Random candidates & 0.9299 $\pm$ 0.0198 & 0.9800 $\pm$ 0.0102 \\
SMOTE~\cite{chawla2002smote} & 0.8960 $\pm$ 0.0197 & 0.9604 $\pm$ 0.0152 \\
Borderline-SMOTE~\cite{han2005borderline} & 0.9117 $\pm$ 0.0212 & 0.9671 $\pm$ 0.0145 \\
ADASYN~\cite{he2008adasyn} & 0.9083 $\pm$ 0.0172 & 0.9652 $\pm$ 0.0145 \\
Safe-Level-SMOTE~\cite{bunkhumpornpat2009safe} & 0.8939 $\pm$ 0.0145 & 0.9584 $\pm$ 0.0101 \\
KMeans-SMOTE~\cite{last2017oversampling} & 0.8957 $\pm$ 0.0182 & 0.9598 $\pm$ 0.0143 \\
DeepSMOTE~\cite{dablain2022deepsmote} & 0.8967 $\pm$ 0.0202 & 0.9601 $\pm$ 0.0144 \\
AugMax~\cite{wang2021augmax} & 0.9086 $\pm$ 0.0198 & 0.9660 $\pm$ 0.0153 \\
TSynD epistemic criterion~\cite{niemeijer2024tsynd} & 0.9021 $\pm$ 0.0313 & 0.9615 $\pm$ 0.0239 \\
Feedback-guided criterion~\cite{hemmat2023feedback} & 0.9330 $\pm$ 0.0184 & 0.9799 $\pm$ 0.0137 \\
GenDataAgent gradient-variance criterion~\cite{li2025gendataagent} & 0.9097 $\pm$ 0.0840 & 0.9666 $\pm$ 0.0475 \\
Deliberate Practice entropy criterion~\cite{askari2025improving} & 0.9437 $\pm$ 0.0169 & \textbf{0.9857 $\pm$ 0.0083} \\
Conformal filtering~\cite{wu2025filtering} & 0.9209 $\pm$ 0.0254 & 0.9725 $\pm$ 0.0194 \\
Long-tail rare/hard criterion~\cite{hayden2025generative} & 0.9369 $\pm$ 0.0197 & 0.9831 $\pm$ 0.0112 \\
C2I boundary-influence criterion~\cite{kim2025generate} & 0.9429 $\pm$ 0.0170 & 0.9855 $\pm$ 0.0080 \\
Uncertainty-only~\cite{settles2009active} & 0.9431 $\pm$ 0.0166 & 0.9855 $\pm$ 0.0083 \\
\midrule
LiBaGS (ours) & \textbf{0.9454 $\pm$ 0.0163} & 0.9853 $\pm$ 0.0082 \\
\bottomrule
\end{tabular}
\end{table}

\begin{table}[h!]
\centering
\small
\caption{8$\times$8 digits results on classes 3 vs.\ 8. LiBaGS learns the synthetic count (47.4 $\pm$ 10.2 across 5 seeds) by marginal-gain stopping; all baselines use this count for fair selection comparison.}
\label{tab:digits8x8}
\begin{tabular}{lcc}
\toprule
Method & Accuracy & AUROC \\
\midrule
ERM~\cite{vapnik1999overview} & 0.9391 $\pm$ 0.0167 & 0.9905 $\pm$ 0.0049 \\
Noise augmentation & 0.9342 $\pm$ 0.0189 & 0.9896 $\pm$ 0.0056 \\
Random candidates & 0.9491 $\pm$ 0.0167 & 0.9845 $\pm$ 0.0075 \\
SMOTE~\cite{chawla2002smote} & 0.9441 $\pm$ 0.0237 & 0.9915 $\pm$ 0.0059 \\
Borderline-SMOTE~\cite{han2005borderline} & 0.9404 $\pm$ 0.0227 & 0.9895 $\pm$ 0.0065 \\
ADASYN~\cite{he2008adasyn} & 0.9478 $\pm$ 0.0209 & 0.9905 $\pm$ 0.0053 \\
Safe-Level-SMOTE~\cite{bunkhumpornpat2009safe} & 0.9453 $\pm$ 0.0221 & 0.9896 $\pm$ 0.0067 \\
KMeans-SMOTE~\cite{last2017oversampling} & 0.9453 $\pm$ 0.0161 & 0.9908 $\pm$ 0.0055 \\
DeepSMOTE~\cite{dablain2022deepsmote} & 0.9366 $\pm$ 0.0217 & 0.9893 $\pm$ 0.0066 \\
AugMax~\cite{wang2021augmax} & 0.9404 $\pm$ 0.0251 & 0.9906 $\pm$ 0.0060 \\
TSynD epistemic criterion~\cite{niemeijer2024tsynd} & 0.9503 $\pm$ 0.0158 & 0.9859 $\pm$ 0.0080 \\
Feedback-guided criterion~\cite{hemmat2023feedback} & 0.9516 $\pm$ 0.0265 & 0.9860 $\pm$ 0.0203 \\
GenDataAgent gradient-variance criterion~\cite{li2025gendataagent} & 0.9342 $\pm$ 0.0306 & 0.9845 $\pm$ 0.0106 \\
Deliberate Practice entropy criterion~\cite{askari2025improving} & 0.9466 $\pm$ 0.0303 & 0.9911 $\pm$ 0.0072 \\
Conformal filtering~\cite{wu2025filtering} & 0.9466 $\pm$ 0.0273 & 0.9856 $\pm$ 0.0248 \\
Long-tail rare/hard criterion~\cite{hayden2025generative} & 0.9478 $\pm$ 0.0143 & 0.9914 $\pm$ 0.0063 \\
C2I boundary-influence criterion~\cite{kim2025generate} & 0.9404 $\pm$ 0.0222 & 0.9904 $\pm$ 0.0065 \\
Uncertainty-only~\cite{settles2009active} & 0.9391 $\pm$ 0.0178 & \textbf{0.9917 $\pm$ 0.0051} \\
\midrule
LiBaGS (ours) & \textbf{0.9540 $\pm$ 0.0315} & 0.9873 $\pm$ 0.0104 \\
\bottomrule
\end{tabular}
\end{table}

\begin{figure}[h!]
    \centering
    \includegraphics[width=0.54\linewidth]{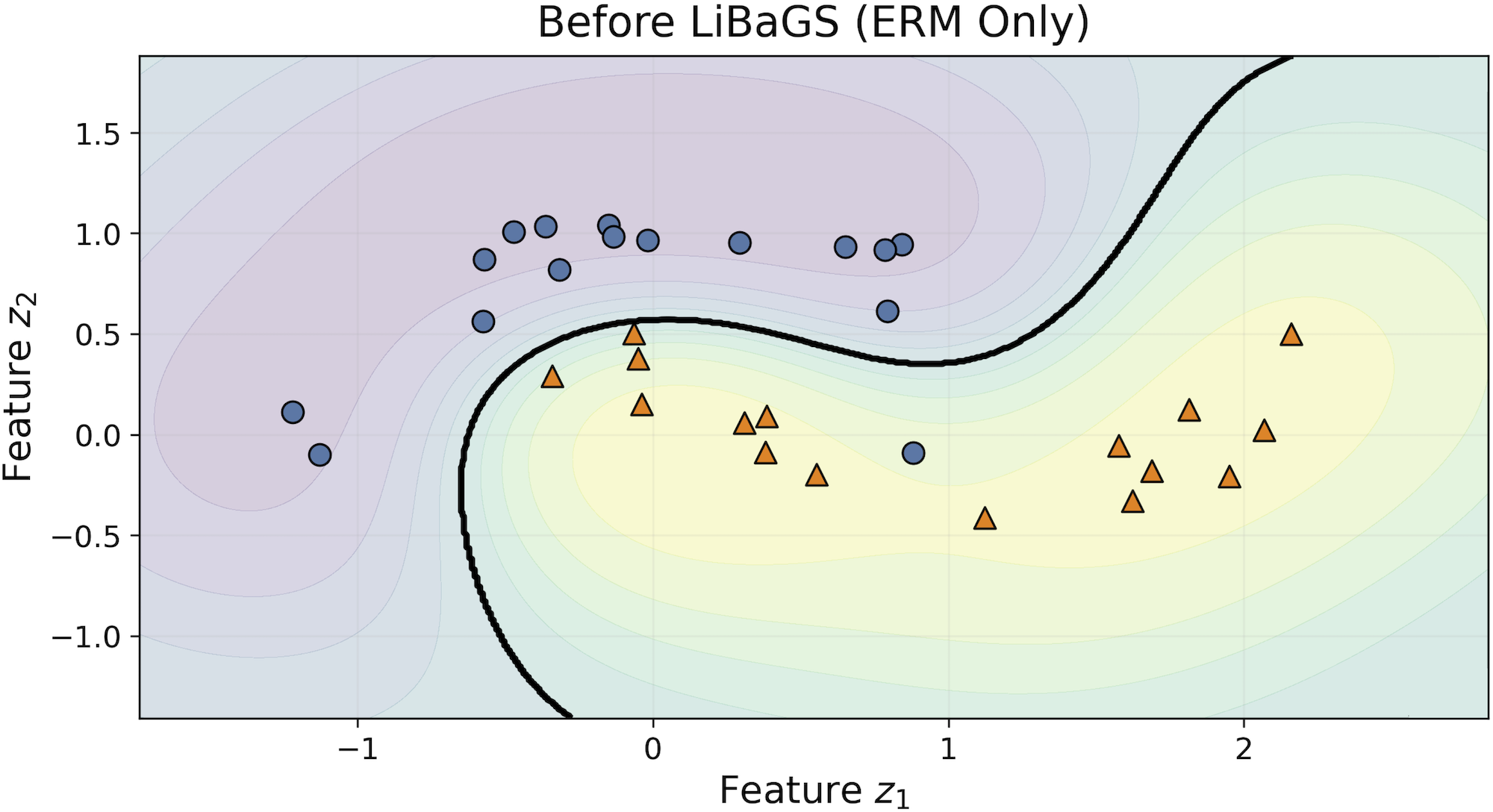} \\
    \vspace{1em}
    \includegraphics[width=0.54\linewidth]{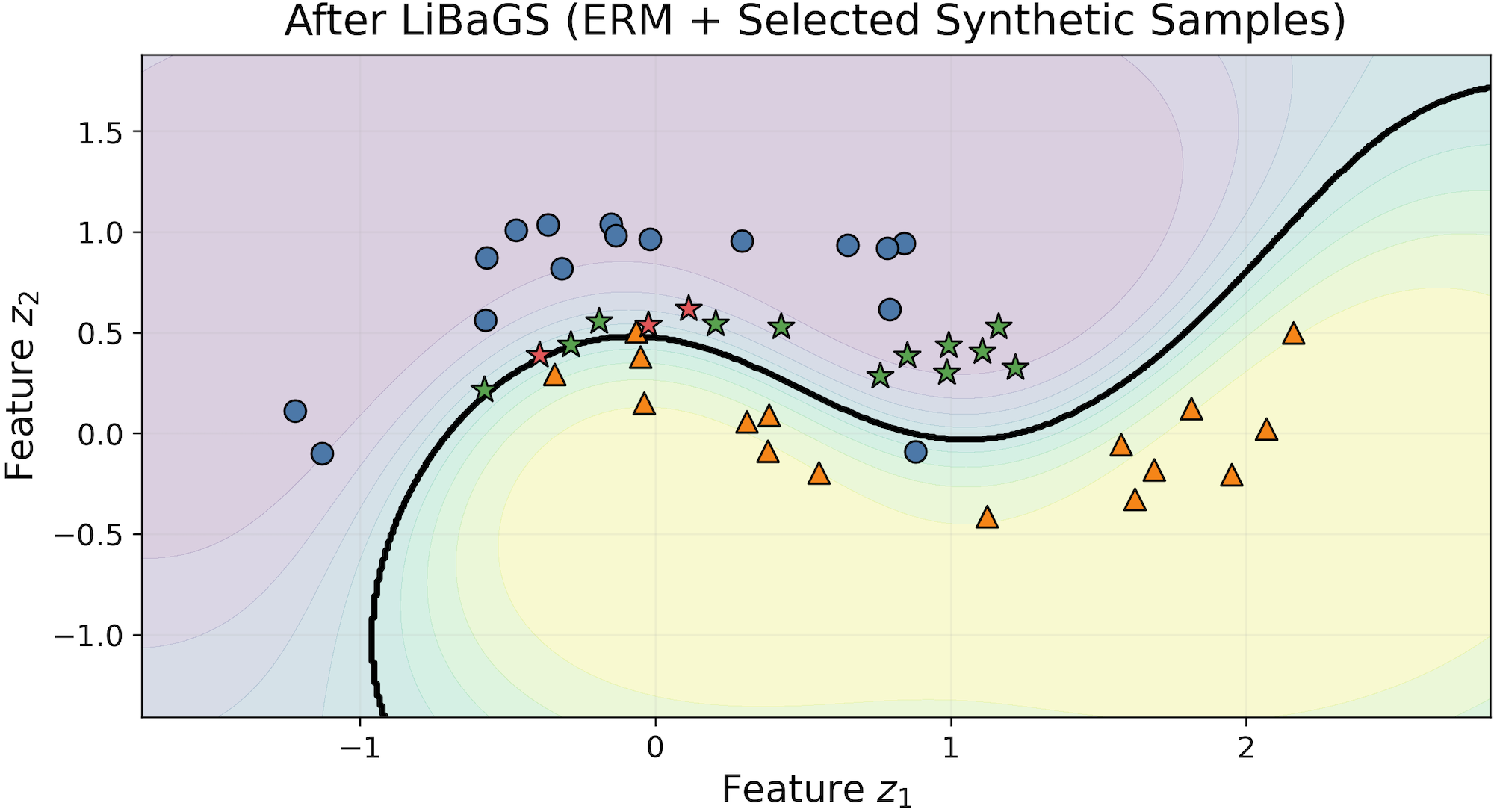} \\
    \vspace{1em}
    \includegraphics[width=0.38\linewidth]{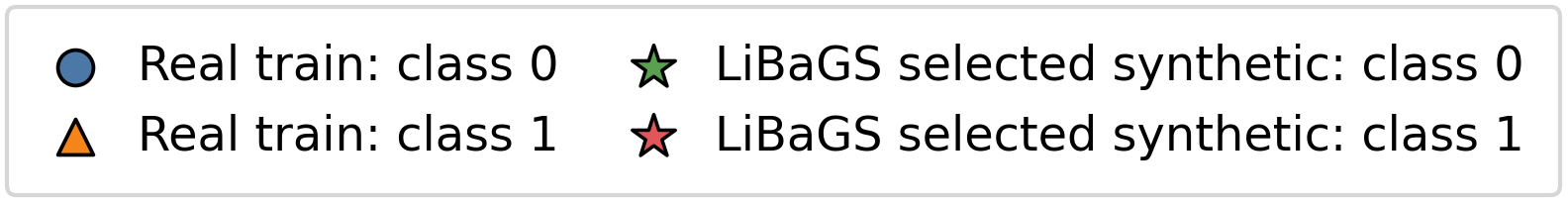}
    \caption{
Qualitative illustration of boundary-gap selection on the two-moons task. 
\textbf{Top}: with only a small real training set, ERM learns an imperfect decision boundary because several boundary-neighborhood regions are weakly covered by real samples. 
\textbf{Bottom}: LiBaGS identifies synthetic candidates with high boundary-gap value, combining boundary proximity, uncertainty, support validity, and low real-data coverage. The selected candidates, shown as green and red stars, are added to the real training set and produce a visibly improved decision boundary. The background shading represents the predicted class-probability surface.
}
    \label{fig}
\end{figure}

\section{Computational cost}
\label{compute}
Let $N$ be the number of real samples, $M$ the number of synthetic candidates, and $d$ the feature dimension. After candidate generation, LiBaGS has three main costs: fitting a lightweight scoring model on real features, scoring estimation for candidates, and diversity and support filtering, with the last step dominating the cost. The diversity and support filtering forms a candidate-candidate similarity matrix, costing $O(M^2d)$ time and $O(M^2)$ memory. Thus, after generation, the exact LiBaGS cost is $O(M^2d)$ for the asymptotic time complexity and $O(M^2)$ for the asymptotic space complexity. Empirically, for image data, generation still takes the most time and space. We used an NVIDIA A100 GPU with 40 GB of VRAM for the CIFAR-10 experiments, including image generation, which took about 1:20 minutes per seed run, given our run settings.

\section{Discussion \& Impact}
\label{impact}

LiBaGS can reduce the amount of real data needed in low-resource settings by selecting synthetic examples that target sparse boundary regions. This is especially useful when data collection is expensive, privacy-sensitive, time-consuming, or difficult to repeat. Rather than adding synthetic samples uniformly or relying only on uncertainty, LiBaGS attempts to identify candidates that are simultaneously near the decision boundary, plausible under the real data distribution, and located in regions with limited real-data coverage. This makes synthetic augmentation more targeted and can reduce unnecessary training on redundant or low-value generated samples.

The main risks come from synthetic data generation itself. If a generator reflects bias, label noise, spurious correlations, or unrealistic artifacts, LiBaGS may still select candidates that reinforce those problems. Support filtering reduces this risk but does not eliminate it, since a biased or incomplete real dataset can make biased synthetic samples appear on-support. Similarly, soft labels may reduce overconfidence near the boundary, but they cannot correct systematic errors in the generator or in the scoring model. As a result, LiBaGS should be viewed as a selection method rather than a guarantee of data quality.

In sensitive domains such as medicine, autonomous systems, education, etc., synthetic data can create a false sense of coverage. A selected synthetic sample may appear useful according to model-based scores while still being clinically implausible, demographically biased, or inconsistent with real deployment conditions. In such cases, LiBaGS should be paired with domain review, dataset audits, bias evaluation, external validation, and careful documentation of the generator, candidate pool, selected samples, and intended use.

\section{Limitations}
\label{limitations}
LiBaGS depends on the quality of the candidate pool. If the generator produces mostly easy examples, redundant examples, or off-support artifacts, the method may learn a small synthetic count or select candidates that provide only marginal gains. The method also relies on the representation map $h$ and the scoring model; poor representations can make $\hat p(z)$, $b(z)$, and $\pi(z)$ unreliable. Future work should compare the baselines in large-scale or multi-class regimes. We also use selection-signal baselines rather than reproducing every full external pipeline, which isolates the relevant selection criteria but may not capture all implementation details of those methods in detail.

\section{Conclusion}
LiBaGS is a lightweight method for targeted synthetic data selection. Rather than adding a fixed number of generated samples, it scores candidates by boundary importance, uncertainty, support validity, and real-data coverage, then stops when the estimated marginal gain becomes small. Across the two moons task, the 8$\times$8 digits 3 vs.\ 8 task, and the CIFAR-10 cat vs.\ dog experiment, LiBaGS improves over ERM and achieves the best mean accuracy among several selection baselines. The results support the claim that synthetic data is most useful when it fills sparse, on-support boundary gaps rather than merely increasing the size of the dataset.

\clearpage
\bibliographystyle{plainnat}
\bibliography{neurips_refs}

\clearpage
\appendix

\section{Technical appendices and supplementary material}
\label{appendix}
\label{app:proofs}

\subsection{Proof of Theorem~\ref{thm:local_alloc}}

\setcounter{theorem}{0}
\begin{theorem}[Local risk and boundary-gap allocation]
Assume that, in a small region around $z$, the local sample count is proportional to $np(z)+mq(z)$, where $p$ is the real density and $q$ is the selected synthetic density. If the local error decreases with this count and is weighted by this: $r(z)=a_\tau(z)u(z)b(z)$, then the allocation objective is:
\begin{equation}
J_m(q)=
\int_{\mathcal Z}
\frac{r(z)}{np(z)+mq(z)}
\,dz.
\end{equation}
For fixed $m>0$, for some $\lambda>0$, minimizing $J_m(q)$ over $q\geq0$ with $\int q(z)\,dz=1$ gives:
\begin{equation}
q^*(z)
=
\frac{1}{m}\left[
\sqrt{\frac{r(z)}{\lambda}}
-
np(z)
\right]_+.
\end{equation}
\end{theorem}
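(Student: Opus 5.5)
The first part of the statement is a modeling definition, so the real content is the minimization. The plan is to treat it as a convex variational problem with a pointwise separable integrand and to solve it with a Lagrange multiplier for the mass constraint together with complementary slackness for the constraint $q\ge 0$.

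First, convexity. For each fixed $z$ with $r(z)\ge 0$, the map $s\mapsto r(z)/(np(z)+ms)$ is convex and nonincreasing on $s\ge 0$ wherever $np(z)+ms>0$. Hence $J_m$ is convex on the convex set $\{q\ge0,\int q=1\}$, and KKT conditions are sufficient for optimality. Form the Lagrangian
\[
\mathcal L(q,\lambda')=\int_{\mathcal Z}\frac{r(z)}{np(z)+mq(z)}\,dz+\lambda'\Bigl(\int_{\mathcal Z} q(z)\,dz-1\Bigr).
\]
Because it is an integral of a function of $q(z)$ alone, it can be minimized pointwise over $q(z)\ge0$. The pointwise stationarity condition is
\[
-\frac{m\,r(z)}{(np(z)+mq(z))^2}+\lambda'=0 \quad\text{wherever } q(z)>0.
\]
Setting $\lambda=\lambda'/m$ gives $np(z)+mq(z)=\sqrt{r(z)/\lambda}$.

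Where the derivative at $q(z)=0$ is already nonnegative, that is $r(z)/(np(z))^2\le\lambda$, or equivalently $\sqrt{r(z)/\lambda}\le np(z)$, the pointwise minimizer is $q(z)=0$. Combining the two cases yields
\[
q^*(z)=\frac1m\Bigl[\sqrt{r(z)/\lambda}-np(z)\Bigr]_+.
\]
This is the water-filling form, and the two cases exactly encode complementary slackness.

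Second, determine $\lambda$. Define
\[
\Phi(\lambda)=\int_{\mathcal Z}\frac1m\Bigl[\sqrt{r(z)/\lambda}-np(z)\Bigr]_+dz.
\]
This function is continuous and nonincreasing in $\lambda$. It tends to $0$ as $\lambda\to\infty$ (by dominated or monotone convergence) and grows large as $\lambda\to0^+$, provided $r>0$ on a set of positive measure. By the intermediate value theorem some $\lambda>0$ satisfies $\Phi(\lambda)=1$. For this $\lambda$ the candidate $q^*$ is feasible and satisfies the KKT conditions, so it is optimal by convexity.

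As a cleaner alternative to invoking infinite-dimensional KKT, I would verify optimality directly. For any feasible $q$, pointwise convexity gives
\[
f_z(q(z))\ge f_z(q^*(z))+f_z'(q^*(z))\bigl(q(z)-q^*(z)\bigr),
\]
where $f_z(s)=r(z)/(np(z)+ms)$. The derivative satisfies $f_z'(q^*(z))=-m\lambda$ on $\{q^*>0\}$ and $f_z'(q^*(z))\ge-m\lambda$ on $\{q^*=0\}$, where also $q(z)-q^*(z)\ge0$. Integrating, the linear term is at least $-m\lambda\int(q-q^*)=0$, so $J_m(q)\ge J_m(q^*)$. If $r>0$, strict convexity also gives uniqueness almost everywhere.

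The main obstacle is the existence of $\lambda$, together with the integrability hypotheses the statement leaves implicit. The argument needs $\Phi(\lambda)<\infty$, for example by taking $\mathcal Z$ bounded or $r$ integrable in a suitable sense, and it needs $\Phi(\lambda)\to\infty$ (or at least exceeding $1$) as $\lambda\to0$. It also needs the degenerate points where $p(z)=0$ and $r(z)>0$ to be handled; there $f_z'(0)=-\infty$, so $q^*>0$ automatically, which is consistent with the formula. I would state these regularity assumptions explicitly, namely that $r$ is positive on a set of positive measure and that $\Phi$ is finite, and then run the intermediate value argument.
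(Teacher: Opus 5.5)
Your proposal is correct and follows essentially the same route as the paper's proof: both use a common multiplier on $\{q^*>0\}$ (the paper's ``equal marginal benefit'' constant $C$, your $\lambda'$), solve for the total coverage $np(z)+mq^*(z)=\sqrt{r(z)/\lambda}$ with $\lambda=C/m$, then clip at zero and normalize. Your version is more complete than the paper's: the paper derives only the necessary condition on the support and clips without checking $r(z)/(np(z))^2\le\lambda$ on $\{q^*=0\}$, asserts rather than proves that a normalizing $\lambda$ exists, and mentions convexity without using it to certify global optimality, and your complementary-slackness case, intermediate-value argument for $\Phi$, and supporting-line verification supply exactly these missing pieces.
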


\begin{proof}
In a small region around $z$, the number of nearby training examples is proportional to $np(z)+mq(z)$. Local class-probability estimates become more stable as this count increases, so the local error is proportional to the inverse count. LiBaGS weights this local term using $r(z)=a_\tau(z)u(z)b(z)$, where $a_\tau(z)$ emphasizes boundary regions, $u(z)$ measures uncertainty, and $b(z)$ measures support. This gives the objective:
\begin{equation}
    J_m(q)=\int_{\mathcal Z}\frac{r(z)}{np(z)+mq(z)}\,dz.
\end{equation}

We optimize over valid synthetic allocation densities. The density $q(z)$ specifies where the fixed synthetic budget $m$ is placed in representation space. Thus $q(z)\geq 0$ and $\int q(z)\,dz=1$, while $mq(z)$ is the amount of synthetic mass assigned near $z$.

For a fixed $z$, define $\phi_z(q)=\frac{r(z)}{np(z)+mq}$. Since $\phi_z''(q)=\frac{2m^2r(z)}{(np(z)+mq)^3}\geq0$, each local term is convex in the allocation variable. Therefore, the full objective is convex in $q$.

Let $q^*$ be the optimal allocation. A region with $q^*(z)>0$ receives synthetic mass. Since the total synthetic budget is fixed, moving a small amount of mass between any two such regions should not reduce the objective; otherwise, $q^*$ would not be optimal. The derivative $\frac{d}{dq(z)}\phi_z(q)$ measures local change from adding synthetic mass near $z$. It is negative because adding mass increases the denominator $np(z)+mq(z)$ and decreases the local objective term. If two selected regions had different derivatives, shifting mass toward the region with the larger decrease would improve allocation. So, all regions with $q^*(z)>0$ must have the same derivative value. Hence, for some constant $C>0$,
\begin{equation}
    \frac{d}{dq(z)}
\left(
\frac{r(z)}{np(z)+mq(z)}
\right)
=
-C.
\end{equation}

Differentiating gives:
\begin{equation}
-\frac{mr(z)}{(np(z)+mq^*(z))^2}+C=0.
\end{equation}
Solving for the total local coverage gives:
\begin{equation}
np(z)+mq^*(z)
=
\sqrt{\frac{mr(z)}{C}}.
\end{equation}
Absorbing constants into $\lambda=\frac{C}{m}>0$ yields:
\begin{equation}
mq^*(z)
=
\sqrt{\frac{r(z)}{\lambda}}
-
np(z).
\end{equation}
Since synthetic density cannot be negative, negative allocations are clipped to zero:
\begin{equation}
q^*(z)
=
\frac{1}{m}\left[
\sqrt{\frac{r(z)}{\lambda}}
-
np(z)
\right]_+,
\end{equation}
where $\lambda>0$ is chosen so that the allocation satisfies $\int q^*(z)\,dz=1$.

Replacing $z$ with $z_j$ and replacing $p(z_j)$ with the empirical density estimate $\hat p(z_j)$ gives the finite-candidate score in Eq.~\eqref{eq:gap_score}.
\end{proof}

\subsection{Proof of Theorem~\ref{thm:adaptive}}

\begin{theorem}[Adaptive stopping]
For a small region $j$, let $c_j>0$ be real coverage, $t_j$ the number of selected synthetic samples, and $r_j\geq0$ the boundary importance. The gain from adding one more sample to region $j$ is:
\begin{equation}
\Delta_j(t_j)
=
\frac{r_j}{c_j+t_j}
-
\frac{r_j}{c_j+t_j+1}
=
\frac{r_j}{(c_j+t_j)(c_j+t_j+1)}.
\end{equation}
Greedy selection with threshold $\eta$ accepts gains $> \eta$ and stops when best remaining gain is $< \eta$.
\end{theorem}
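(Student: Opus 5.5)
The plan is to verify the three pieces of content in Theorem~\ref{thm:adaptive}: the algebraic identity for the gain, the diminishing-returns property of $\Delta_j$, and the correctness of the threshold rule for greedy selection.

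\textbf{Identity.} Put the two fractions over the common denominator $(c_j+t_j)(c_j+t_j+1)$. The numerator is then $r_j\bigl[(c_j+t_j+1)-(c_j+t_j)\bigr]=r_j$. Because $c_j>0$ and $t_j\geq 0$, both factors of the denominator are positive, so the expression is well defined and $\Delta_j(t_j)\geq 0$.

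\textbf{Diminishing returns.} Since $r_j\geq 0$ and $s\mapsto s(s+1)$ is strictly increasing on $s>0$, the map $t_j\mapsto\Delta_j(t_j)$ is nonincreasing, and strictly decreasing whenever $r_j>0$. It also tends to $0$ as $t_j\to\infty$. Equivalently, the per-region objective $t\mapsto r_j/(c_j+t)$ is convex and decreasing, matching the convexity established in the proof of Theorem~\ref{thm:local_alloc}. The total objective $\sum_j r_j/(c_j+t_j)$ is separable across regions, so the total gain from one more sample in region $j$ is exactly $\Delta_j(t_j)$, independent of the other regions.

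\textbf{Greedy with threshold.} At each step, greedy selection picks $j^\star=\arg\max_j\Delta_j(t_j)$. If this gain exceeds $\eta$, it increments $t_{j^\star}$; otherwise it halts. By monotonicity, once $\Delta_j(t_j)<\eta$ for region $j$, every later increment of that region also has gain below $\eta$. So when the best remaining gain falls below $\eta$, no region can ever again yield a gain above $\eta$, and stopping discards nothing that the rule would otherwise accept.

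\textbf{Termination and structure.} Termination follows because each region with $r_j>0$ accepts at most
\[
T_j=\max\{t\geq 0:\ r_j>\eta\,(c_j+t)(c_j+t+1)\}+1
\]
samples, and this is finite since $\eta>0$. Regions with $r_j=0$ accept none. Hence $\widehat m=\sum_j T_j$, and the greedy set coincides with the set of all unit increments whose gain exceeds $\eta$. Because the objective is a separable sum of convex decreasing terms, greedy choice is optimal for any fixed budget, and the threshold rule selects its budget at the point where marginal gains cross $\eta$. This is the discrete analogue of the equal-marginal-benefit condition with $\eta$ in the role of $\lambda$ from Theorem~\ref{thm:local_alloc}.

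\textbf{Main obstacle.} The statement itself is nearly definitional. The delicate points are ties at exactly $\Delta=\eta$, which the statement leaves unspecified and I would assign to either side by convention, and the choice of $\eta$ from the flattening point, which I would treat as given rather than prove anything about.
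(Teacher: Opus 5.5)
Your proposal is correct and follows the same route as the paper's proof. Both arguments use the common-denominator identity, then diminishing returns because $(c_j+t_j)(c_j+t_j+1)$ grows with $t_j$, then the observation that once the largest remaining gain is below $\eta$, no other gain can clear the threshold (and, by your monotonicity step, no later gain can either).

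Your termination count, tie convention and fixed-budget optimality remark add rigor beyond the paper. One small fix: your formula for $T_j$ needs the convention $\max\emptyset=-1$ for regions with $0<r_j\le\eta\,c_j(c_j+1)$, since those regions accept no samples.
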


\begin{proof}
For region $j$, adding one synthetic sample changes the local term from $\frac{r_j}{c_j+t_j}$ to $\frac{r_j}{c_j+t_j+1}$. The gain is therefore:
\begin{equation}
\Delta_j(t_j)
=
\frac{r_j}{c_j+t_j}
-
\frac{r_j}{c_j+t_j+1}
=
\frac{r_j}{(c_j+t_j)(c_j+t_j+1)}.
\end{equation}
As $t_j$ increases, the denominator increases, so the marginal gain decreases. Thus, each region has diminishing returns: the first synthetic samples assigned to an under-covered region are more useful than later ones assigned to the same region.

With some threshold $\eta > 0$ (where $\eta$ is chosen from the flattening point of the ordered marginal gain curve), adding a sample is worthwhile only if its gain is above $\eta$. Greedy selection always chooses the largest available gain across all regions. If this largest remaining gain is below $\eta$, then every other remaining gain is also below $\eta$, so no additional synthetic sample clears the stopping criterion.

Therefore, the greedy rule accepts exactly the samples whose estimated marginal gain is above $\eta$ and stops once all remaining gains are too small. This proves the adaptive stopping rule and shows why the selected count $\widehat m$ is determined by the data rather than fixed in advance.
\end{proof}

\subsection{Proof of Theorem~\ref{thm:soft}}

\begin{theorem}[Soft-label stability]
Let $\rho(z)=\Pr(Y\,|\,Z=z)$ be the true class distribution. The soft label $\widetilde y(z)=\bigl(1-a_\tau(z)\bigr)e_{c(z)}+a_\tau(z)\pi(z)$ satisfies:
\begin{equation}
\|\widetilde y(z)-\rho(z)\|_1
\leq
\bigl(1-a_\tau(z)\bigr)\|e_{c(z)}-\rho(z)\|_1
+
a_\tau(z)\|\pi(z)-\rho(z)\|_1.
\end{equation}
\end{theorem}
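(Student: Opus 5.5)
The plan is to write $\rho(z)$ as the same convex combination of itself that $\widetilde y(z)$ is of its two ingredients, and then apply the triangle inequality together with absolute homogeneity of the $\ell_1$ norm.

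The first step is to check that the weights are a genuine convex combination. By Eq.~\eqref{eq:boundary_weight}, $a_\tau(z)=\exp(-\Delta(z)^2/(2\tau^2))$ is a positive number at most $1$. Hence both $a_\tau(z)$ and $1-a_\tau(z)$ are nonnegative and sum to one. This lets us write
\begin{equation*}
\rho(z)=\bigl(1-a_\tau(z)\bigr)\rho(z)+a_\tau(z)\rho(z).
\end{equation*}

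Subtracting this from the definition of $\widetilde y(z)$ in Eq.~\eqref{eq:soft_label} gives the exact identity
\begin{equation*}
\widetilde y(z)-\rho(z)=\bigl(1-a_\tau(z)\bigr)\bigl(e_{c(z)}-\rho(z)\bigr)+a_\tau(z)\bigl(\pi(z)-\rho(z)\bigr).
\end{equation*}

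Next, take $\|\cdot\|_1$ of both sides. The triangle inequality bounds the norm of the sum by the sum of the norms. Absolute homogeneity, $\|\alpha v\|_1=|\alpha|\,\|v\|_1$, then pulls out the scalars. Because both coefficients are nonnegative, $|1-a_\tau(z)|=1-a_\tau(z)$ and $|a_\tau(z)|=a_\tau(z)$. This yields Eq.~\eqref{eq:soft_error_bound} directly.

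No step here is a real obstacle; the argument is a convexity bound on a norm. The one point needing care is the sign of the weights, since the bound needs $0\le a_\tau(z)\le 1$ and this comes from the Gaussian form of $a_\tau$ with $\tau>0$. No assumption on $e_{c(z)}$, $\pi(z)$ or $\rho(z)$ being probability vectors is required, so the proof works for arbitrary vectors in the class simplex. As a closing remark, the bound also follows from convexity of the $\ell_1$ norm, and it implies $\|\widetilde y(z)-\rho(z)\|_1\le \max\{\|e_{c(z)}-\rho(z)\|_1,\|\pi(z)-\rho(z)\|_1\}$.
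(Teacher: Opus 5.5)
Your proposal is correct and follows the paper's proof: both derive the identity $\widetilde y(z)-\rho(z)=(1-a_\tau(z))(e_{c(z)}-\rho(z))+a_\tau(z)(\pi(z)-\rho(z))$ and then apply the triangle inequality to the $\ell_1$ norm. Your explicit check that $0<a_\tau(z)\le 1$ makes both coefficients nonnegative, so absolute homogeneity pulls them out without absolute-value signs; the paper leaves this step implicit.
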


\begin{proof}
The soft label is:
\begin{equation}
    \widetilde y(z)
=
(1-a_\tau(z))e_{c(z)}
+
a_\tau(z)\pi(z).
\end{equation}

Subtracting the true class distribution $\rho(z)$ on both sides gives:
\begin{equation}
\widetilde y(z)-\rho(z)
=
(1-a_\tau(z))(e_{c(z)}-\rho(z))
+
a_\tau(z)(\pi(z)-\rho(z)).
\end{equation}
Taking the $\ell_1$ norm and applying the triangle inequality, we get:
\begin{equation}
\|\widetilde y(z)-\rho(z)\|_1
\leq
(1-a_\tau(z))\|e_{c(z)}-\rho(z)\|_1
+
a_\tau(z)\|\pi(z)-\rho(z)\|_1.
\end{equation}
\end{proof}

\subsection{Proof of Theorem~\ref{thm:diversity}}

\begin{theorem}[Diversity and support filtering]
Let $F(S)=\sum_{u\in\mathcal C}v_u\max_{j\in S}k(u,j)$ and also let $v_u=G_ub(z_u)$,
with $v_u\geq0$ and $k(u,j)\geq0$. Then, $F$ is monotone submodular.

Also, for a candidate $j$,
\begin{equation}
F(S\cup\{j\})-F(S)
\leq
\sum_{u\in\mathcal C}v_uk(u,j),
\end{equation}
so candidates with low support-weighted neighborhood value cannot pass the stopping threshold.
\end{theorem}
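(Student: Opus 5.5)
The plan is to handle the monotone submodularity claim pointwise and then sum, since $F$ is a nonnegative combination of the functions $f_u(S)=\max_{j\in S}k(u,j)$. We adopt the convention $\max_{j\in\emptyset}k(u,j)=0$, which is consistent with $k\geq 0$. Because $v_u=G_ub(z_u)\geq 0$ (with $G_u\geq0$ from the positive-part clipping in Eq.~\eqref{eq:gap_score} and $b\in[0,1]$), it suffices to show that each $f_u$ is monotone and submodular. Nonnegative weighted sums preserve both properties.

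\textbf{Monotonicity.} If $S\subseteq T$, then the maximum over $T$ ranges over a superset of indices, so $f_u(S)\leq f_u(T)$.

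\textbf{Submodularity.} For $S\subseteq T$ and $j\notin T$, write $M_S=f_u(S)$ and $M_T=f_u(T)$, so that $M_S\leq M_T$. Then
\begin{equation}
f_u(S\cup\{j\})-f_u(S)=\bigl[k(u,j)-M_S\bigr]_+ ,
\end{equation}
and the analogous identity holds for $T$. Since $x\mapsto[k(u,j)-x]_+$ is nonincreasing, the marginal gain with respect to $S$ is at least the marginal gain with respect to $T$. This is the diminishing-returns inequality for $f_u$, and summing over $u$ with weights $v_u$ gives it for $F$. This is the standard facility-location argument.

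\textbf{Marginal-gain bound.} The same identity gives
\begin{equation}
F(S\cup\{j\})-F(S)=\sum_u v_u\bigl[k(u,j)-M_S(u)\bigr]_+ .
\end{equation}
Each bracket is at most $k(u,j)$, because $M_S(u)\geq0$ and $k(u,j)\geq0$. Hence
\begin{equation}
F(S\cup\{j\})-F(S)\leq\sum_u v_uk(u,j),
\end{equation}
which is the claimed inequality.

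\textbf{Consequence for stopping.} The greedy rule of Theorem~\ref{thm:adaptive} only accepts candidates whose gain exceeds $\eta$. If $\sum_u v_uk(u,j)\leq\eta$, then $j$ can never be accepted, whatever the current $S$ is. Since $v_u$ contains the factor $b(z_u)$, this bound is small when the neighborhood of $j$, as weighted by $k$, consists of low-support candidates. That is the informal filtering statement.

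\textbf{Main obstacle.} The only delicate point is interpretive rather than technical. The bound concerns the support-weighted neighborhood value, so it does not directly mention $b(z_j)$ for the candidate itself. I would therefore state the filtering conclusion exactly in terms of $\sum_u v_uk(u,j)\leq\eta$. If a candidate-specific statement is wanted, I would add the assumption that $k$ is localized (for example, a Gaussian kernel with small bandwidth), so that the sum is dominated by $v_j k(j,j)=G_jb(z_j)k(j,j)$. Everything else is routine and needs no earlier result beyond the nonnegativity built into Eq.~\eqref{eq:gap_score}.
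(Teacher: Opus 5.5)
Your proposal is correct and follows essentially the same route as the paper. Both use pointwise monotonicity of the max, and both use the identity $F(S\cup\{j\})-F(S)=\sum_u v_u[k(u,j)-\max_{\ell\in S}k(u,\ell)]_+$: diminishing returns follow because the bracket is nonincreasing in the current coverage, and dropping the coverage term gives the bound. Your convention $\max_{\emptyset}=0$ and your explicit use of $M_S(u)\geq 0$ make the last step more precise than the paper's appeal to $[x-y]_+\leq x$, which silently needs $y\geq 0$.
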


\begin{proof}
Let $A\subseteq B$, then for every $u$,
\begin{equation}
    \max_{j\in A}k(u,j)
\leq
\max_{j\in B}k(u,j).
\end{equation}

Since $v_u\geq0$, summing over $u$ gives $F(A)\leq F(B)$. Thus, $F$ is monotone. 

Let
$A\subseteq B$ and $e\notin A,B$. For every $u$, let $ m_A(u)=\max_{j\in A}k(u,j), 
m_B(u)=\max_{j\in B}k(u,j)$.

Since $A\subseteq B$, we have $m_A(u)\leq m_B(u)$ for every $u$.
The marginal gain of adding $i$ to $A$ is:
\[
F(A\cup\{i\})-F(A)
=
\sum_{u\in\mathcal U}
v_u
\left[
\max\{m_A(u),k(u,i)\}-m_A(u)
\right].
\]
Equivalently,
\[
F(A\cup\{i\})-F(A)
=
\sum_{u\in\mathcal U}
v_u
\left[
k(u,i)-m_A(u)
\right]_+,
\]
where $[x]_+$ clips negative values to 0. Similarly,
\[
F(B\cup\{i\})-F(B)
=
\sum_{u\in\mathcal U}
v_u
\left[
k(u,i)-m_B(u)
\right]_+.
\]
Because $m_A(u)\leq m_B(u)$, we have:
\[
\left[k(u,i)-m_A(u)\right]_+
\geq
\left[k(u,i)-m_B(u)\right]_+
\]
for every $u$. Since $v_u\geq 0$, summing over $u$ gives:
\[
F(A\cup\{i\})-F(A)
\geq
F(B\cup\{i\})-F(B).
\]
Therefore, $F$ is submodular.

So, $F$ is monotone submodular, i.e. adding candidates always improves coverage, but each additional candidate gives increasingly smaller gains once similar candidates have already been selected.

Finally, the marginal gain of adding $j$, as also shown above, is:
\begin{equation}
    F(S\cup\{j\})-F(S)
=
\sum_{u\in\mathcal C}
v_u
\left[
k(u,j)-\max_{\ell\in S}k(u,\ell)
\right]_+.
\end{equation}

Since $[x-y]_+\leq x$ for $x\geq0$,
\begin{equation}
    F(S\cup\{j\})-F(S)
\leq
\sum_{u\in\mathcal C}v_uk(u,j).
\end{equation}

Thus, adding candidate $j$ yields only a small gain when it is highly similar to candidates already contained in the selected set $S$.
\end{proof}

\end{document}